\documentclass{article}

\usepackage{microtype}
\usepackage{graphicx}
\usepackage{subfigure}
\usepackage{booktabs} 
\usepackage{enumitem}

\usepackage{hyperref}
\usepackage{thm-restate}

\usepackage[accepted]{icml2025}

\usepackage{amsmath}
\usepackage{amssymb}
\usepackage{mathtools}
\usepackage{amsthm}

\usepackage[capitalize,noabbrev]{cleveref}

\theoremstyle{plain}
\newtheorem{theorem}{Theorem}[section]

\newtheorem{example}[theorem]{Example}
\newtheorem{assumption}[theorem]{Assumption}
\theoremstyle{definition}
\newtheorem{definition}[theorem]{Definition}
\theoremstyle{remark}

\usepackage[textsize=tiny]{todonotes}
\usepackage[T1]{fontenc}

\icmltitlerunning{Selecting Decision-Relevant Concepts in Reinforcement Learning}

\begin{document}
\newcommand{\nrcomment}[1]{{\color{red} Naveen: {#1}}}

\newcommand{\smcomment}[1]{{\color{blue} Steph: {#1}}}

\newcommand{\glcomment}[1]{{\color{violet} Grace: {#1}}}

\twocolumn[
\icmltitle{Selecting Decision-Relevant Concepts in Reinforcement Learning}


\begin{icmlauthorlist}
\icmlauthor{Naveen Raman}{yyy}
\icmlauthor{Stephanie Milani}{zzz} 
\icmlauthor{Fei Fang}{yyy}
\end{icmlauthorlist}

\icmlaffiliation{yyy}{Carnegie Mellon University}
\icmlaffiliation{zzz}{New York University}
\icmlcorrespondingauthor{Naveen Raman}{naveenr@cmu.edu}

\icmlkeywords{Concept-Based Models, Reinforcement learning, Interpretability, State Abstraction}

\vskip 0.3in
]

\printAffiliationsAndNotice{}

\begin{abstract}
    Training interpretable concept-based policies requires practitioners to manually select which human-understandable concepts an agent should reason with when making sequential decisions. 
This selection demands domain expertise, is time-consuming and costly, scales poorly with the number of candidates, and provides no performance guarantees.
To overcome this limitation, we propose the first algorithms for principled automatic concept selection in sequential decision-making. 
Our key insight is that concept selection can be viewed through the lens of state abstraction: intuitively, a concept is decision-relevant if removing it would cause the agent to confuse states that require different actions. 
As a result, agents should rely on \emph{decision-relevant} concepts; states with the same concept representation should share the same optimal action, which preserves the optimal decision structure of the original state space.
This perspective leads to the Decision-Relevant Selection (DRS) algorithm, which selects a subset of concepts from a candidate set, along with performance bounds relating the selected concepts to the performance of the resulting policy.
Empirically, DRS automatically recovers manually curated concept sets while matching or exceeding their performance, and improves the effectiveness of test-time concept interventions across reinforcement learning benchmarks and real-world healthcare environments.
\end{abstract}
\section{Introduction}
\label{sec:introduction}
Concept-based models produce interpretable agents by making predictions using human-understandable concepts~\cite{tcav}. 
These models consist of a concept predictor, which maps high-dimensional states (e.g., images) to concepts (e.g., ``Paddle Location > 0?''), and a policy which maps concept predictions to actions~\citep{concept_rl}.
Concept-based models have three advantages: 1) interpretability is built into the model, 2) poor decisions can be traced to concept prediction errors, and 3) humans can intervene on mispredicted concepts~\cite{intcem}. 

\begin{figure}[t]
    \centering 
    \includegraphics[width=0.4\textwidth]{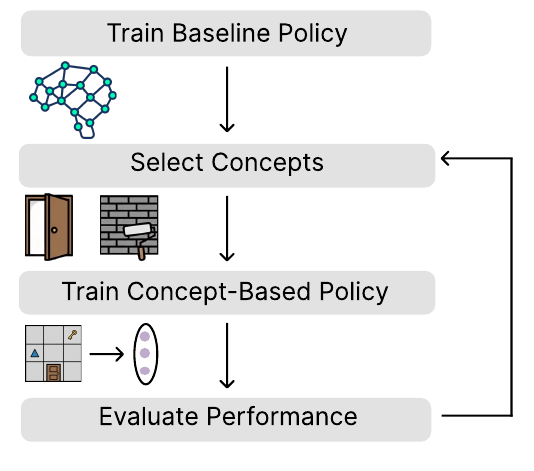}
    \caption{Standard pipeline for training concept-based policies. Practitioners select concepts for decision-making through a labor-intensive process of iteratively selecting candidate concepts, training concept-based policies, and evaluating their performance.}
    \label{fig:concept_pipeline}
\end{figure}

A key step in training these models is choosing the set of concepts for agent decision-making. 
As shown in Figure~\ref{fig:concept_pipeline}, practitioners distill a non-interpretable baseline policy into an interpretable concept-based policy by first selecting an initial set of concepts, then iteratively training and evaluating policies to refine the concept set.
This process is costly; it requires domain expertise, scales poorly with the number of concepts, and provides no performance guarantees.

In this work, we propose the first principled algorithms for concept selection in sequential decision-making (Figure~\ref{fig:pull_figure}). \footnote{To the best of our knowledge}
While prior work has explored using large language models (LLMs) to select concepts in supervised settings~\citep{labo,partially_shared_cbm}, there is little work investigating principled selection for sequential decision-making.  
Such a problem is more complex in sequential decision-making because states must be distinguished according to their long-term decision consequences rather than any immediate label.
For example, the ``Door Location'' concept is useful in Minigrid only if the agent can first obtain the key. 

\begin{figure*}[t]
    \centering 
    \includegraphics[width=\textwidth]{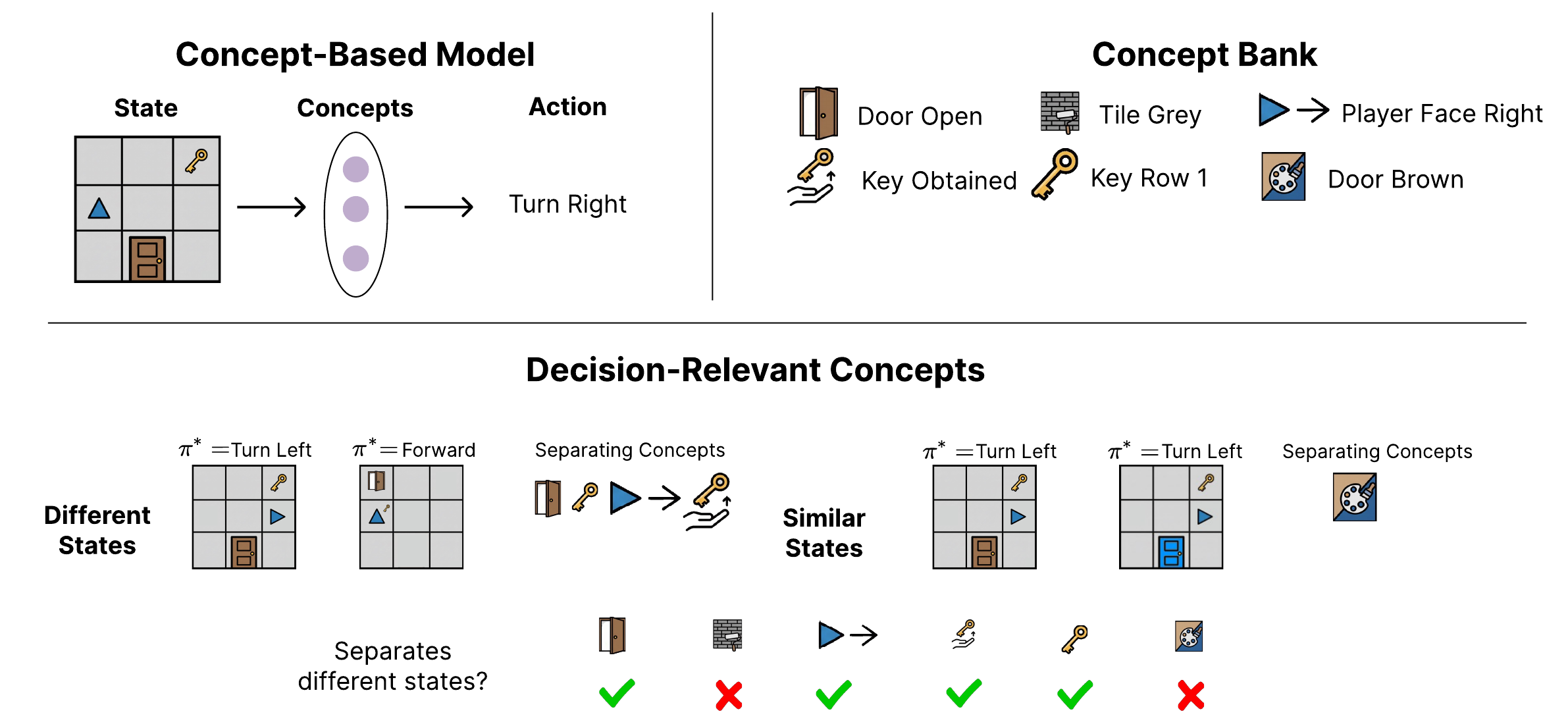}
    \caption{Concept-based models rely on a set of decision-relevant concepts that help distinguish between different states, yet currently these concepts are manually selected. In this work, we study how to identify and select decision-relevant concepts. Our key insight is that decision-relevant concepts best separate ``different'' states, where difference is defined by their decision consequences. We use this insight to develop algorithms for concept selection with performance guarantees.}
    \label{fig:pull_figure}
\end{figure*}

Automating this selection process offers three key benefits.
First, it eliminates manual concept engineering, substantially reducing the human effort needed to train these models.
Second, it improves interpretability by reducing the number of variables used for decision-making~\cite{simplicity_evaluation}.
Third, it maintains model performance on the tasks of interest. 
However, automatic concept selection is challenging because concept utility depends on long-term decision consequences, and naively evaluating all concept subsets requires training exponentially many policies.

To tackle this problem, we propose the decision-relevant selection (DRS) algorithm, which automatically selects concepts that best distinguish between states.
We formalize concepts as a case of \emph{state abstractions}. 
Using this perspective, DRS identifies the subset of concepts that minimizes state abstraction error, ensuring states with identical concept representations share the optimal action.

\textbf{Contributions:} We (i) design algorithms for selecting decision-relevant concepts in decision-making; (ii) derive performance bounds by connecting concept selection to state abstraction theory; (iii) demonstrate strong empirical performance across RL benchmarks and real-world healthcare environments; and (iv) show that well-selected concepts improve the effectiveness of test-time interventions. 
\footnote{Code is available at \url{https://github.com/naveenr414/concept_decisions}}
\section{Preliminaries}
\label{sec:preliminaries}

\paragraph{Reinforcement Learning} We study decision-making in RL for an agent in an infinite-horizon Markov decision process (MDP).
An MDP $(\mathcal{S},\mathcal{A}, P, R,\gamma)$ is a tuple consisting of a set of states $\mathcal{S}$, a set of actions $\mathcal{A}$, a transition function $P: \mathcal{S} \times \mathcal{A} \times \mathcal{S} \rightarrow [0,1]$ that gives the transition probabilities given state-action-state triplets, a reward function $R: \mathcal{S} \times \mathcal{A} \rightarrow \mathbb{R}$ that assigns a reward for each state-action pair, and a discount factor $\gamma \in [0, 1]$.
The agent learns a (possibly stochastic) policy $\pi: \mathcal{S}  \rightarrow \mathcal{A}$ that maps states to actions. 
We evaluate $\pi$ through the value $V^{\pi}(s)$ and action-value functions $Q^{\pi}(s,a)$: 
\[
V^\pi(s) := \mathbb{E}_\pi\!\left[\sum_{t=0}^{\infty} \gamma^t R(s_t,a_t) \mid s_0 = s \right]
\] 
\[
Q^\pi(s,a)\;:=\;\mathbb{E}_\pi\!\left[\sum_{t=0}^{\infty}\gamma^{t}\,R(s_t,a_t)\;\middle|\;s_0=s,\;a_0=a\right],
\] 
where $a_t = \pi(s_t),\; s_{t+1}\sim P(\cdot\mid s_t,a_t)$.
The aim is to find the optimal policy $\pi^{*}$, which maximizes $\mathop{\mathbb{E}}[V^{\pi^{*}}(s)].$

\paragraph{Concept-Based Models} Concept-based models ensure interpretability by mapping states to human-understandable concepts, then mapping concepts to actions (Figure~\ref{fig:pull_figure}). 
As in previous work~\cite{cbms}, each concept, $c_{i}$, is a Boolean-valued function, and the concept set is $\mathcal{C}= \{\,c_1, c_2, \dots, c_K\}, 
  c_i: \mathcal{S} \rightarrow \{0,1\}$. 
Binary concepts are standard~\citep{cem}, though our results can be extended to continuous concepts. 
Concept-based models consist of a concept predictor, $g_{\mathbf{c}}(s) := [c_{i_1}(s),\ldots,c_{i_k}(s)]$ and a policy $\pi_{\mathbf{c}}: \{0,1\}^{k} \rightarrow \mathcal{A}$, so that a state $s$ is mapped to an action $a$ as $\pi_{\mathbf{c}}(g_{\mathbf{c}}(s))$~\cite{concept_rl,concept_rl_2}.
Users can intervene on concept-based models during test-time by correcting a subset of concept predictions $g_{\mathbf{c}}(s).$

\paragraph{State Abstractions} 
We propose viewing concepts as a special case of \emph{state abstractions}. 
State abstractions group together states via a function $\phi_{\mathrm{SA}}: \mathcal{S} \to \mathcal{Z}$ such that states in the same abstraction satisfy certain desiderata.
Because concept representations may merge states that still differ in their true dynamics or value, we focus on \emph{$\epsilon$-approximate state abstractions}, which require grouped states to have near-identical action values under the optimal policy~\citep{approximate_state_abstraction}.
Formally, if $\phi_{\mathrm{SA}}(s_{1}) = \phi_{\mathrm{SA}}(s_{2})$, then $\max\limits_{a \in \mathcal{A}} \bigl|\,Q^{\pi^{*}}(s_1,a) - Q^{\pi^{*}}(s_2,a)\bigr|
    \;\le\; \epsilon.$
The state abstraction literature allows us to give guarantees based on the level of coverage for a given abstraction: 
\begin{theorem}[Theorem 1,~\cite{approximate_state_abstraction}]
\label{thm:approximate_state_abstraction}
    Suppose that $\phi_{\mathrm{SA}}$ is an $\epsilon$-approximate state abstraction. 
    Let $\pi_{\phi_{\mathrm{SA}}}$ be 
    \[
\pi_{\phi_{\mathrm{SA}}}
  = \arg\max_{\pi}
      \mathbb{E}_{s\sim\mathcal{S}}
      \bigl[ Q^{\pi^{*}}(\phi_{\mathrm{SA}}(s),\pi(\phi_{\mathrm{SA}}(s))) \bigr],
    \]
    Then $V^{\pi^{*}}(s) - V^{\pi_{\phi_{\mathrm{SA}}}}(s) \leq \frac{2 \epsilon}{(1-\gamma)^{2}}$ for all $s$. 
\end{theorem}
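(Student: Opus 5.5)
The plan is to compare $\pi^{*}$ with the abstract greedy policy $\pi_{\phi_{\mathrm{SA}}}$ through the ground $Q^{*}:=Q^{\pi^{*}}$ and a one-step performance-difference recursion. Write $\pi_G$ for the ground policy $\pi_G(s)=\pi_{\phi_{\mathrm{SA}}}(\phi_{\mathrm{SA}}(s))$. Concretely, we read the abstract action-value as the cluster average
\[
Q_\phi(z,a)=\mathbb{E}_{s\sim w(\cdot\mid z)}\bigl[Q^{*}(s,a)\bigr]
\]
for some weighting $w$ over the states in the cluster $z$; this is the standard construction behind the statement. Then $\pi_{\phi_{\mathrm{SA}}}(z)\in\arg\max_a Q_\phi(z,a)$.

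\textbf{Step 1 (per-cluster action gap).} For any $s$ with $\phi_{\mathrm{SA}}(s)=z$, the $\epsilon$-approximate condition gives $|Q^{*}(s,a)-Q_\phi(z,a)|\le\epsilon$ for every $a$. Let $a_G=\pi_G(s)$. Comparing $a_G$ with $\pi^{*}(s)$ then yields
\[
Q^{*}(s,\pi^{*}(s))-Q^{*}(s,a_G)\le \bigl(Q_\phi(z,\pi^{*}(s))+\epsilon\bigr)-\bigl(Q_\phi(z,a_G)-\epsilon\bigr)\le 2\epsilon,
\]
where the last inequality holds because $a_G$ maximizes $Q_\phi(z,\cdot)$. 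Thus the greedy abstract action is $2\epsilon$-suboptimal in $Q^{*}$ at every ground state.

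\textbf{Step 2 (unrolling).} We have $V^{*}(s)-V^{\pi_G}(s)=\bigl[Q^{*}(s,\pi^{*}(s))-Q^{*}(s,a_G)\bigr]+\bigl[Q^{*}(s,a_G)-Q^{\pi_G}(s,a_G)\bigr]$. The second bracket equals $\gamma\,\mathbb{E}_{s'\sim P(\cdot\mid s,a_G)}\bigl[V^{*}(s')-V^{\pi_G}(s')\bigr]$. Let $\Delta=\sup_s\bigl(V^{*}(s)-V^{\pi_G}(s)\bigr)$. Then $\Delta\le 2\epsilon+\gamma\Delta$, so $\Delta\le 2\epsilon/(1-\gamma)$, which is already within the claimed $2\epsilon/(1-\gamma)^2$ because $1-\gamma\le 1$. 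Finiteness of $\Delta$ follows from bounded rewards for $\gamma<1$. For $\gamma=1$ the bound is vacuous and we do not treat it.

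\textbf{Main obstacle.} If the paper's $\pi_{\phi_{\mathrm{SA}}}$ is instead defined through the abstract MDP's own optimal $Q$ rather than averages of ground $Q^{*}$, Step 1 needs $|Q^{*}_{\text{abs}}(z,a)-Q^{*}(s,a)|\le \epsilon/(1-\gamma)$. This is the source of the extra $(1-\gamma)^{-1}$ factor in \cite{approximate_state_abstraction}. I would prove it by induction on value iteration: the abstract Bellman operator applied to $Q^{*}$, lifted to the ground, differs from $Q^{*}$ by at most $\epsilon$ plus $\gamma$ times the previous error, so the error stays bounded by $\epsilon/(1-\gamma)$ at every iteration and hence in the limit. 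Steps 1 and 2 then go through with $\epsilon$ replaced by $\epsilon/(1-\gamma)$, giving exactly $2\epsilon/(1-\gamma)^{2}$.
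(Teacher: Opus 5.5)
Your argument is correct. Note that the paper gives no proof of this statement: it imports it from Theorem~1 of \cite{approximate_state_abstraction}, so the comparison is with that source.

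Your proposal contains two routes. Steps~1--2 are the generic reduction: if some abstract action-value is uniformly within $\eta$ of $Q^{*}$ at every ground state, the induced greedy policy loses at most $2\eta/(1-\gamma)$. Your final paragraph supplies the other ingredient of the cited proof. When $\pi_{\phi_{\mathrm{SA}}}$ is the optimal policy of the abstract MDP built from a weighting function, iterating the abstract Bellman operator gives $\eta=\epsilon/(1-\gamma)$, and hence $2\epsilon/(1-\gamma)^{2}$. That two-step structure is essentially the original argument.

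Your primary route adds something. Under the definition actually displayed in the statement, $\pi_{\phi_{\mathrm{SA}}}$ is greedy with respect to cluster averages of the ground $Q^{\pi^{*}}$. Convexity then gives $\eta=\epsilon$ for free, so the loss is at most $2\epsilon/(1-\gamma)$. This is a sharper bound, and it implies the stated one since $0\le\gamma<1$. The extra factor $(1-\gamma)^{-1}$ is therefore the price of solving the abstract MDP rather than being greedy on averages of $Q^{*}$, not a consequence of the abstraction itself. The cited route in turn covers the policy obtained by planning in the abstract MDP, which needs no knowledge of $Q^{*}$.

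One point to tighten: the displayed $\arg\max_{\pi}$ only pins down $\pi(z)\in\arg\max_a Q_\phi(z,a)$ on clusters $z$ that have positive mass under the sampling distribution $s\sim\mathcal{S}$, with $w(s\mid z)$ proportional to that distribution. On null clusters it leaves $\pi$ unconstrained. Because Step~2 takes a supremum over all states, an arbitrary action on a null cluster can break the pointwise bound. You should therefore either assume full support, which is automatic for the uniform distribution on a finite $\mathcal{S}$, or state explicitly that the greedy choice is made on every cluster.
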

\section{Identifying Decision-Relevant Concepts}
\label{sec:concept_state_abstraction}
We now present the first formalism of the concept selection problem. 
We begin by defining the problem setting, then propose \emph{decision-relevancy} as the key principle for selecting concepts. 
To demonstrate the benefit of this principle, we establish theory showing that decision-relevancy can preserve task performance during training and yield greater performance increases during test-time interventions.

\subsection{Problem Setting}
\label{sec:problem_setting}
Concept predictors $g_{\mathbf{c}}$ rely on the selection of a set of $k$ concepts $\mathbf{c}$ from a larger bank of $K$ concepts $\mathcal{C}$.
Here, $k$ controls the number of concepts selected; a larger $k$ can hinder interpretability~\cite{simplicity_evaluation} by introducing additional complexity. 
Concept banks are obtained either manually or through large language models~\citep{label_free_cbms}.
While any $\mathbf{c}$ is semantically meaningful, the reward can vary: 

\begin{example}
Consider a state space $\mathcal{S} = \{1,2,3,4\}$ and an action space $\mathcal{A} = \{\mathrm{left},\mathrm{stay},\mathrm{right}\}$, which corresponds to an agent navigating a 1D state space with 4 states that loop around. 
Let $R(1,\cdot) = R(3,\cdot) = 1$ and $R(2,\cdot) = R(4,\cdot) = 0$. 
Consider two binary concepts: \[
\begin{aligned}
c_1(s) &= \mathbf{1}\{s\bmod 2 = 0\},\\
c_2(s) &= \mathbf{1}\{s\bmod 3 = 0\},\\
\end{aligned}
\]
where $\mathbf{1}$ is the indicator function. 
When $k=1$, selecting $\mathbf{c} = \{c_{1}\}$ ensures that states sharing the same parity have identical rewards and symmetric transitions under $\pi^{*}$. 
However, selecting $\mathbf{c} = \{c_{2}\}$ means the reward is not consistent; while $g_{\mathbf{c}}(1) = g_{\mathbf{c}}(2)$, $R(1,\mathrm{stay}) \neq R(2,\mathrm{stay}).$
A policy trained on $c_{1}$ will achieve a reward of 9.5 while a policy trained on $c_{2}$ will achieve a reward of 9.1 because it will take a suboptimal action for $s=1$. 
\end{example}

\paragraph{Objective}
Concept selection algorithms select $\mathbf{c} \subseteq \mathcal{C}$, where $\mathcal{C}$ is the universe of concepts, and can come from human curation, LLM generation, or other sources. 
The goal is to maximize the performance of the policy $\pi^{*}_{\mathbf{c}}$: 
\begin{equation}
\max_{\mathbf c:|\mathbf c|\le k}
\mathbb{E}_{s \sim \mu}
\bigl[
Q^{\pi^{*}_{\mathbf c}}(s, \pi^{*}_{\mathbf c}(g_{\mathbf c}(s)))
\bigr].
\label{eq:objective}
\end{equation}
Here, $\mu \in \Delta(\mathcal{S})$ is the initial state distribution, and $\pi^{*}_{\mathbf{c}}$ is the optimal policy given the concept predictor $g_{\mathbf{c}}$. 
Unlike classical state abstraction, which learns abstract state representations without constraints, we restrict our attention to selecting a subset of concepts from a fixed, human-interpretable concept bank.
Our problem requires combinatorial optimization over a fixed concept bank, making it NP-hard to compute the optimal solution (proof in Appendix~\ref{sec:proofs}). 

\subsection{Defining Decision-Relevancy}
\label{sec:identifying_relevant_concepts}
In Equation~\ref{eq:objective}, the set of concepts $\mathbf{c}$ help distinguish between states through the concept predictor $g_{\mathbf{c}}.$ 
In other words, concepts need to separate states with different optimal actions. 
As a result, we propose identifying \emph{decision-relevant} concepts as the key principle for concept selection. 

Intuitively, concepts are decision-relevant only to the extent that they distinguish
states in which different actions lead to different outcomes.
A natural condition is that if $\pi^{*}(s) \neq \pi^{*}(s')$,
then $g_{\mathbf{c}}(s) \neq g_{\mathbf{c}}(s')$, where $s$ and $s'$ are pairs of states.
A stronger desideratum is that if
$Q^{\pi^{*}}(s,\cdot) \neq Q^{\pi^{*}}(s',\cdot)$,
then $g_{\mathbf{c}}(s) \neq g_{\mathbf{c}}(s')$.
The former helps us better learn policies, while the latter is needed to guarantee good performance from the resulting policy.
While our main concern is the latter condition, the former is useful for guiding algorithm development. 
To formally define decision-relevance, we first introduce two ideas: 
\begin{definition}[Q-Distance]
For two states $s, s' \in \mathcal{S}$, the Q-distance is
$D_{s,s'} \coloneqq \max\limits_{a \in \mathcal{A}}
\lvert Q^{\pi}(s,a) - Q^{\pi}(s',a) \rvert$.
\end{definition}
\begin{definition}[Abstraction Error]
    Define the abstraction error for a concept predictor $g$ as 
    \[
\epsilon_{Q^\pi}(g) := \max_{s,s' : g(s)=g(s')}  D_{s,s'}.
\]
Unless otherwise stated, let $\epsilon(g) = \epsilon_{Q^{\pi}}(g).$
\end{definition}


\subsection{Benefits of Decision-Relevant Concepts}

\paragraph{Improved Performance}
We first leverage results from the state abstraction literature~\citep{approximate_state_abstraction} to provide guarantees for a set of concepts $\mathbf{c}$ based on $\epsilon_{Q^{\pi}}$: 
\begin{restatable}{proposition2}{propconceptabstraction}
\label{thm:approximate_concept_abstraction}
For a concept predictor $g_{\mathbf{c}}$, we have that 
\[
V^{\pi^{*}}(s) - V^{\pi^{*}_{\mathbf{c}}}(s) \leq \frac{2 \epsilon(g_{\mathbf{c}})}{(1-\gamma)^2}.
\]
\end{restatable}

We prove that $g_{\mathbf{c}}$ induces an $\epsilon$-state abstraction, allowing us to use Theorem~\ref{thm:approximate_state_abstraction} (full proof in Appendix~\ref{sec:proofs}). 
Intuitively, $\epsilon$ captures the quality of the abstraction; if $\epsilon$ is low, then the abstraction is fine-grained, leading to better performance because $\pi^{*}_{\mathbf{c}}$ can better distinguish between states. 
Obtaining a tighter bound is difficult because the $1/{(1-\gamma)^2}$ term is standard in approximate state abstraction~\cite{approximate_state_abstraction}.

\paragraph{Receptivity to Test-Time Intervention}
During test-time intervention, a (potentially simulated) user intervenes on an $\alpha$-fraction of concepts, raising their accuracy from $\delta^{A}$ to $\delta^{H}$. 
Here, the exact subset of intervened concepts is unknown until test-time. 
Intuitively, when interventions improve concept accuracy, policies using decision-relevant concepts benefit more from human effort:

\begin{assumption}
\label{asm:monotone_dominance}
For any two concept sets $\mathbf c, \mathbf c'$ and any two accuracy vectors
$\delta' \preceq \delta$, if
\[
\mathbb{E}[V^{\pi^{*}_{\mathbf c,\delta}}(s)]
\ge
\mathbb{E}[V^{\pi^{*}_{\mathbf c',\delta}}(s)],
\]
then
\[
\mathbb{E}[V^{\pi^{*}_{\mathbf c,\delta'}}(s)]
\ge
\mathbb{E}[V^{\pi^{*}_{\mathbf c',\delta'}}(s)].
\]
\end{assumption}

\begin{assumption}
    \label{asm:abstraction_dominance}
    For any two concept sets $\mathbf{c}, \mathbf{c'}$, when $\delta=\mathbf{1}$, if $\epsilon(g_{\mathbf{c}}) \leq \epsilon(g_{\mathbf{c}'})$, then  
    \[
    \mathbb{E}[V^{\pi^{*}_{\mathbf c,\delta'}}(s)]
    \ge
    \mathbb{E}[V^{\pi^{*}_{\mathbf c',\delta'}}(s)].
    \]
\end{assumption}

Assumption~\ref{asm:monotone_dominance} holds when concept errors are independent across concepts, or when $\mathbb{E}[V^{\pi^{*}_{\mathbf{c},\delta}}] = f_{\pi^{*}}(\mathbf{c}) + g_{\pi^{*}}(\delta)$ and $g$ is increasing in $\delta$. 
Assumption~\ref{asm:abstraction_dominance} holds if Proposition~\ref{thm:approximate_concept_abstraction} is tight, or more generally if $V^{\pi^{*}}(s)-V^{\pi^{*}_{\mathbf{C}}}(s) \leq \frac{2 \epsilon}{(1-\gamma)^2}$ is constant across values of $\mathbf{c}.$

\begin{restatable}{theorem2}{theoremhuman}
Let $\mathbf c$ and $\mathbf c'$ be two concept sets, and let
$\epsilon(g_{\mathbf c}) \le \epsilon(g_{\mathbf c'})$.
Under Assumptions~\ref{asm:monotone_dominance} and~\ref{asm:abstraction_dominance}, for any intervention level $\alpha \in [0,1]$ and any accuracy vector
$\delta^{(\alpha)}$ obtained by improving an $\alpha$-fraction of concepts from
$\delta^{A}$ toward $\delta^{H}=\mathbf 1$, we have
\[
\mathbb{E}[V^{\pi^{*}_{\mathbf c,\delta^{(\alpha)}}}(s)]
\ge
\mathbb{E}[V^{\pi^{*}_{\mathbf c',\delta^{(\alpha)}}}(s)].
\]
\end{restatable}
Intuitively, when interventions improve concept accuracy, policies built on more decision-relevant concepts benefit more from the same level of human effort.
We provide empirical support for these Assumptions in Section~\ref{sec:experiments} and these assumptions tend to hold because well-selected concepts lead to better performance across accuracy values. 
\section{Concept Selection Algorithms}
\label{sec:imperfect}

We build on our problem formulation and optimality notion to propose two concept selection algorithms. 
The algorithms tackle perfect and imperfect concept predictors respectively. 

\subsection{Selection with Perfect Concept Predictors}
We introduce an algorithm for concept selection based on the intuition from Section~\ref{sec:identifying_relevant_concepts} called \textbf{Decision-Relevant Selection (DRS)}. 
The DRS algorithm chooses concepts $\mathbf{c}$ that minimize $\epsilon(g_{\mathbf{c}})$ while separating states with different actions (via a hyperparameter $\rho$).  
The former is necessary to guarantee good performance, while the latter allows us to more easily learn good policies. 
Formally, let $\mathbf{x}$ be a 0-1 vector representing the subset of concepts selected, and $\mathbf{Y}$ a binary matrix representing pairs of states differing in concept representations. 
We aim to minimize the abstraction error $D_{s,s'}$ across state pairs $s,s'$ with $Y_{s,s'} = 1$, while limiting concept selection to $k$ concepts:

\begin{subequations}
\label{lp:concept_selection}
\begin{align}
\min_{\mathbf{x}, \mathbf{Y}} \quad & \sum_{s,s' \in \mathcal{S}} D_{s,s'} (1 - Y_{s,s'}) \label{eq:P1} \\[4pt]
\text{s.t.} \quad 
    & \sum_{j=1}^{K} x_j \le k \label{eq:P1a} \\
    & Y_{s,s'} \le \sum_{j=1}^{K} x_j \, \mathbf{1}[c_j(s) \neq c_j(s')],
        \quad \forall\, s,s' \in \mathcal{S} \label{eq:P1b} \\
    & Y_{s,s'} \le Y_{t,t'} \notag \\
    & \qquad \forall\, (s,s'),(t,t') \in \mathcal{S}^2 
        \text{ s.t. } D_{s,s'} \le D_{t,t'} \label{eq:P1c} \\
    & \sum_{s,s' \in \mathcal{S}} \mathbf{1}[\pi(s) \neq \pi(s')] \, Y_{s,s'} \notag \\
    & \qquad \ge \rho \sum_{s,s' \in \mathcal{S}} \mathbf{1}[\pi(s) \neq \pi(s')] \label{eq:P1d} \\
    & x_j \in \{0,1\}, \quad Y_{s,s'} \in \{0,1\}. \label{eq:P1e}
\end{align}
\end{subequations}

\paragraph{Practical Considerations} 
Because the state space can be large, we sample from agent rollouts and consider only abstract states; do not differentiate states with $c_{j}(s) = c_{j}(s') \forall j \in [K]$.  
Let $n_d$ denote the number of \emph{distinct abstract states} observed, i.e., distinct values of $g_{\mathbf c}(s)$.
While $k$ binary concepts can induce up to $2^{k}$ abstract states, rollouts concentrate on a small subset due to environmental constraints and policy-induced structure.
Two structural properties keep $n_d$  small in practice: (1) environmental constraints limit which concept combinations are physically realizable, and (2) policy-induced structure concentrates rollouts on a small subset of reachable abstract states.
States sharing the same abstract representation can be aggregated, yielding an MILP with $O(n_d^2 + K)$ variables/constraints (runtimes and further discussion in Appendix~\ref{sec:runtime}). 

\subsection{Performance Guarantees}
\label{sec:performance_guarantees}
We provide performance guarantees for the DRS algorithm. 
\begin{restatable}{theorem2}{theoremperfect}
Let $(\mathbf x^*, \mathbf y^*)$ be an optimal solution to Equation~\ref{eq:P1} with $\rho=0$, and let
$\mathbf c = \{c_j : x_j^* = 1\}$.
$g_{\mathbf c}$ minimizes $\epsilon(g_{\mathbf{c}})$ among predictors using $k$ concepts:
\[
\epsilon(g_{\mathbf c})
\;=\;
\min_{g : |\mathbf c_g| \le k} \epsilon(g).
\]
\end{restatable}

The solution exactly captures the optimization problem over $\epsilon(g_{\mathbf{c}})$, so the resulting concepts have small abstraction error. 

While $Q^{\pi}(s,a)$ might only be approximately known, the DRS algorithm is robust to errors in $Q^{\pi}$: 
\begin{restatable}{theorem2}{theoremrobust}
\label{thm:robust}
Assume that $\max\limits_{s,a} |\hat Q^{\pi}(s,a) - Q^{\pi}(s,a)| \le \delta$.
Let $(\mathbf{x}^*, \mathbf{Y}^*)$ be an optimal solution to Equation~\ref{eq:P1} constructed
using $D_{s,s'}$, and let $\mathbf c^* = \{c_j : x_j^* = 1\}$.
Let $\hat D_{s,s'} = \max_a |\hat Q^{\pi}(s,a)-\hat Q^{\pi}(s',a)|$, and let
$(\hat{\mathbf x}, \hat{\mathbf Y})$ be an optimal solution to Equation~\ref{eq:P1} constructed
using $\hat D_{s,s'}$, with induced abstraction $g_{\hat{\mathbf c}}$.
Then
\[
\epsilon_{Q^{\pi}}(g_{\hat{\mathbf c}}) \le \epsilon_{Q^{\pi}}(g_{\mathbf c^*}) + 4\delta,
\]
and for all $s \in \mathcal S$,
\[
V^{\pi_{\mathbf{c}^{*}}}(s) - V^{\pi_{\hat{\mathbf{c}}}}(s)
\le \frac{2(\epsilon_{Q^{\pi}}(g_{\mathbf c^*})+ 4\delta)}{(1-\gamma)^2}.
\]
\end{restatable}

We prove this in Appendix~\ref{sec:proofs} by bounding $|\hat{D}_{s,s'}-D_{s,s'}|$. 

\subsection{Selection with Imperfect Concept Predictors}
When concept predictors are imperfect, state separation (Equation~\ref{eq:P1b}) is stochastic, requiring new algorithms for concept selection. 
To tackle this, we (i) introduce a noise model for concept predictors to capture imperfections, (ii) show that no method can guarantee low abstraction error under adversarial perturbations, and (iii) derive a lower bound on state separation under stochastic perturbations.

\paragraph{Noisy Concept Predictors} 
Let $f^{T}_{\mathbf{c}} = [f^{T_1}_{c_1},\ldots,f^{T_{k}}_{c_{k}}]$ be a vector of perturbation functions so $T_{i} \subseteq \mathcal{S}$ and $\frac{|T_{i}|}{|\mathcal{S}|} = 1-\delta_{i}$. 
Here, $T_{i}$ are the states where we flip concept predictions: 
\[
f^{T_i}_{c_i}(s)
:=
\begin{cases}
1 - c_i(s), & \text{if } s \in T_i, \\[4pt]
c_i(s), & \text{if } s \notin T_i .
\end{cases}
\]

\paragraph{Limits Under Adversarial Noise} We note that while we can extend concept selection to imperfect scenarios, any algorithm will perform poorly under adversarial noise.  
Even if all concept predictors are near-perfect, there always exist perturbations so that the induced policy performs poorly:
\begin{restatable}{lemma2}{lemmaperturbation}
Suppose that for each concept $i$, an adversary may apply a perturbation
$f^{(T_i)}$ that flips the value of $c_i$ on a subset $T_i \subseteq \mathcal S$
with $|T_i|/|\mathcal S| = 1-\delta_i$. 
Then for any concept predictor $g_{\mathbf c}$ with $\delta_{i} < 1$, there exists a collection of
perturbations $
f = \{f^{T_1}_{c_{1}},\ldots,f^{T_k}_{c_{k}}\}$
such that
$\epsilon(f(g_{\mathbf c})) = \max_{s,s'} D_{s,s'}.$
\end{restatable}
We prove this by constructing adversarial perturbation functions $f$ that fail to distinguish between states $s$ and $s'$ with maximum values of $D_{s,s'}$, showing that worst-case robustness is fundamentally impossible (proof in Appendix~\ref{sec:proofs}).

\paragraph{State Separation Under Noisy Concept Predictors} 
In the DRS algorithm, we enforce state separation through $Y_{s,s'} \leq \sum_{j=1}^{K} x_{j} \mathbf{1}[c_{j}(s) \neq c_{j}(s')]$. 
When concept predictors are noisy, this constraint holds stochastically. 
To introduce this, let $\delta = [\delta_{1},\ldots,\delta_{k}]$ be a vector representing the accuracy of each concept, where each concept predictor has stochastic and independent errors. 
Next, note that the probability that noisy predictions
remain distinct is 
$\mathbb{P}[g_{\mathbf{c}}(s)_{j}\neq g_{\mathbf{c}}(s')_{j} \mid c_j(s)\neq c_j(s')]
= \delta_j^2 + (1-\delta_j)^2$,
since disagreement is preserved when both predictors are either correct
or incorrect.
For concepts with $c_j(s)=c_j(s')$, we conservatively ignore
noise-induced separation.
Because the product form is non-convex, we apply a logarithmic transformation to obtain a concave lower bound: 
\[
\log\!\left(1-Y_{s,s'}\right)
  \ge
\sum_{j : \mathbf{1}[c_j(s)\neq c_j(s')]}^{K}
x_{j} 
\log\!\bigl(1-(\delta_j^2+(1-\delta_j)^2)\bigr).
\]

We construct a new algorithm, called \textbf{DRS-log}, which uses a log-based constraint for state separation.
In practice, we optimize this efficiently using standard convex nonlinear primitives supported by modern optimization solvers. 
\begin{figure*}[ht!]
    \centering 
    \includegraphics[width=\textwidth]{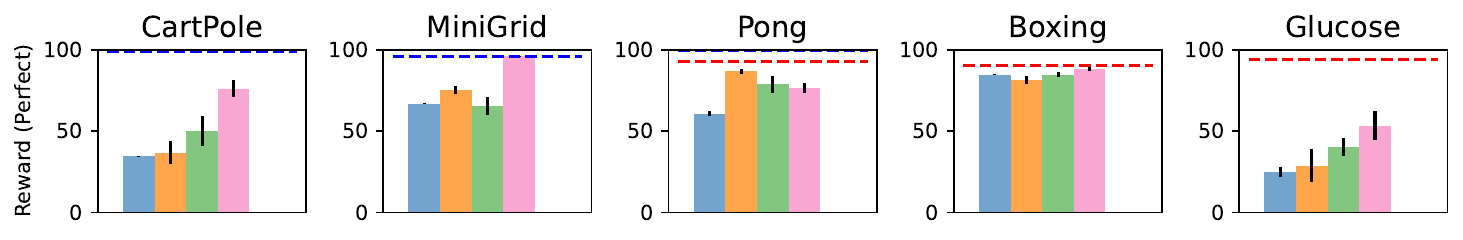}
    \includegraphics[width=\textwidth]{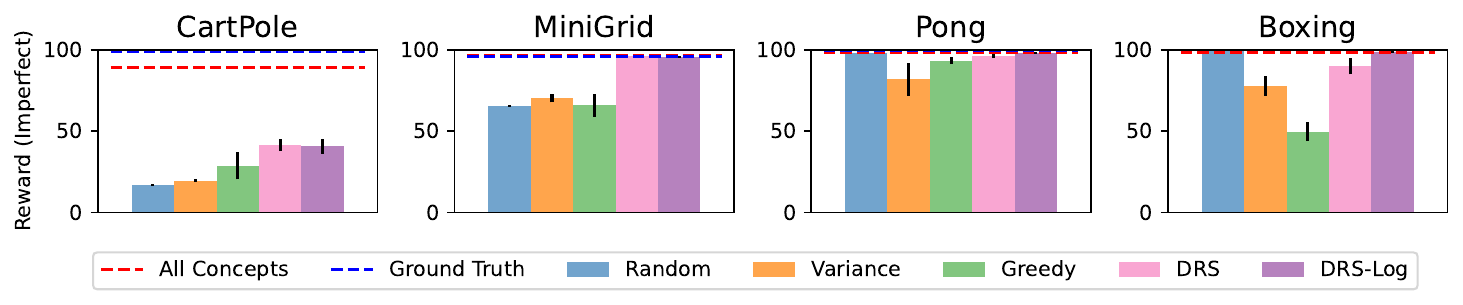}
    \caption{Normalized reward of concept selection algorithms with perfect (top) and imperfect (bottom) concept predictors. Our algorithm, DRS, improves performance compared to the random, variance, and greedy baselines for four out of five environments in the perfect setting. 
    DRS and DRS-log improve performance or are optimal in all environments in the imperfect setting. }
    \label{fig:comparison}
\end{figure*}

\section{Experiments}
\label{sec:experiments}
We evaluate our algorithms by focusing on three questions: 
\begin{enumerate}[nosep]
    \item \textbf{RQ1:} What is the impact of decision-relevant concepts on performance?  
    \item \textbf{RQ2:} What is the impact of decision-relevant concepts on test-time intervention? 
    \item \textbf{RQ3:} How do automatically selected concepts compare with manually selected ones? 
\end{enumerate}

\subsection{Experimental Setup}
\label{sec:experimental_setup}
\paragraph{Environments} We evaluate on CartPole, MiniGrid, Pong, Boxing, and Glucose~\cite{rl_textbook,mini_grid,atari,glucose}. 
Concepts are discretizations of human-understandable features. 
For example, in CartPole, we discretize ``Paddle Y'' and ``Ball Velocity X'' into ``Ball Velocity X < -1'' and ``Paddle Y > 0'' (see Appendix~\ref{sec:environments_detail}). 

\paragraph{Training Details} We evaluate with perfect and imperfect concept predictors, where imperfect concept predictors use CNNs to predict concepts. 
In Appendix~\ref{sec:environments_detail}, we provide exact accuracy values for each concept predictor, but note that all concept predictors achieve at least 95\% accuracy except for CartPole (which achieves 85\% due to velocity-related concepts). 
Because concept predictors achieve 100\% accuracy for Glucose, we report only the perfect setting for that environment.
We run all experiments with 6 seeds. 

\begin{figure}[t]
    \centering 
    \includegraphics[width=0.5\textwidth]{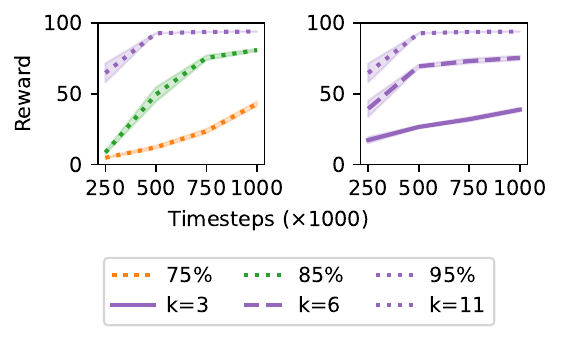}

    \caption{We vary the number of timesteps that we train policies for in MiniGrid, while also varying the accuracy of concept predictors (left) or the number of concepts selected (right). Increasing the accuracy of concept predictors speeds up training, while increasing the number of concepts increases the maximum performance.}
    \label{fig:num_concepts}
\end{figure}

\begin{figure*}[]
    \centering 
    \includegraphics[width=\textwidth]{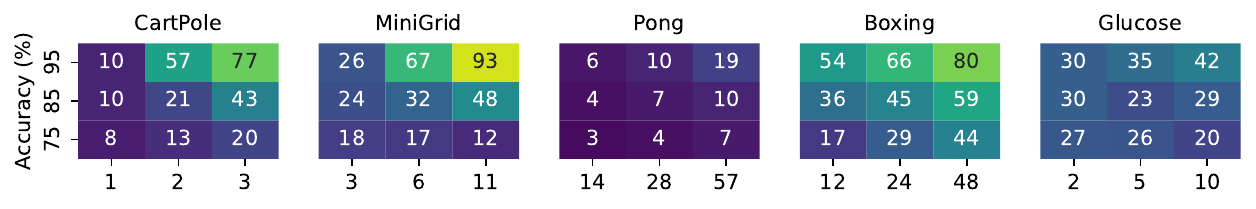}

    \caption{Impact of the number of concepts selected against the accuracy of the underlying concepts. 
    Increasing either number or accuracy of concepts has a similar impact on performance, and that sufficiently accurate and many concepts are needed to ensure good performance.}
    \label{fig:num_concepts_accuracy}
\end{figure*}

\begin{figure*}[h]
    \centering 
    \includegraphics[width=\textwidth]{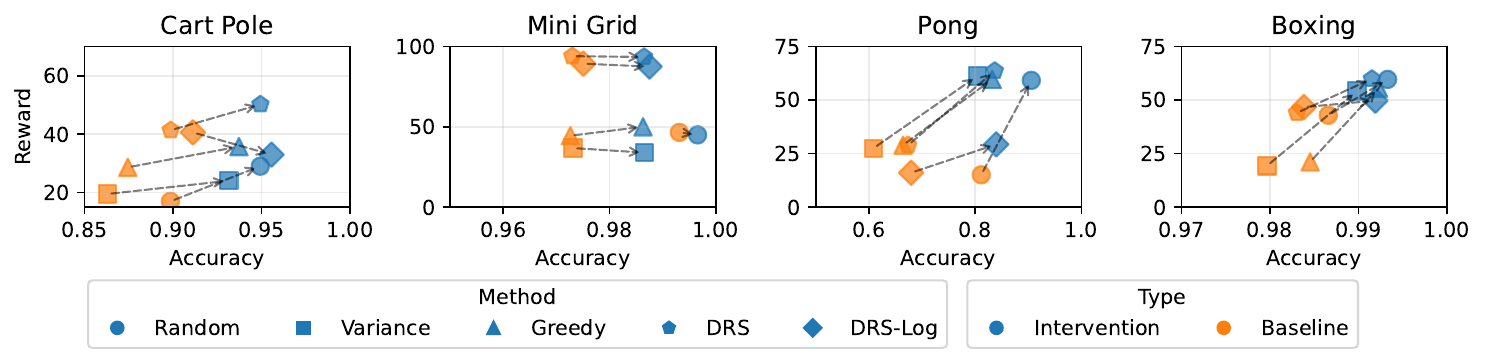}
    \caption{We assess the impact of concept selection upon test-time intervention across four environments. Across all four environments, the DRS algorithm has the highest reward both before and after intervention, showing that well-selected concepts improve intervention. }
    \label{fig:intervention_comparison}
\end{figure*}

\paragraph{Algorithms} 
We train concept predictors using NatureCNN~\cite{nature_cnn} and policies using proximal policy optimization (PPO)~\cite{ppo} (details in Appendix~\ref{sec:environments_detail} and Appendix~\ref{sec:algorithm_details}).
For the DRS and DRS-log algorithms we fix $\rho=0.75$ and evaluate other choices in Appendix~\ref{sec:other_algorithms}. 
In DRS, we relax the constraint corresponding to Equation~\ref{eq:P1c} because it is overly pessimistic by enforcing concept selection strictly in the order determined by state abstraction error. 
We compare against three baselines: 1) Random, which selects a random $k$-subset of concepts; 2) Variance, which selects concepts by ranking them according to their variation between being active ($c_{i}(s) = 1$) and inactive ($c_{i}(s) = 0$); 3) Greedy, which selects concepts that best explain downstream behavior by conditioning on actions and iteratively choosing those that most reduce variability in Q-values.
We normalize all rewards on a 0-100 scale, where the extremes are the min and max per-environment reward.  

\subsection{RQ1: Impact on Performance}
\label{sec:selection_experiments}
\paragraph{Decision-relevant concepts improve performance} We compare concept selection algorithms in both perfect (top) and imperfect (bottom) settings in Figure~\ref{fig:comparison}.
For each environment, we fix $k=\frac{K}{4}$.\footnote{
In Appendix~\ref{sec:varied_k}, we vary $k$ and find similar results.}
For perfect settings, DRS performs best for four out of five environments, performing 159\% better than the best baseline in CartPole, 28\% better in MiniGrid, 5\% better in Boxing, and 44\% better in Glucose. 
While DRS performs poorly in perfect settings with Pong, it makes up for this in the imperfect setting, as the selected concepts are better able to leverage uncertainty. 
In imperfect settings, DRS and DRS-log either outperform baselines or are near-optimal across all four environments. 
In the MiniGrid, Pong, and Boxing environments, both DRS and DRS-log are optimal, while in CartPole, both DRS and DRS-log perform at least 58\% better than alternatives. 
In Appendix~\ref{sec:qualitative}, we give qualitative insights.

\paragraph{Decision-relevant concepts increase maximum performance, while concept predictor accuracy increases training efficiency}
To better understand how well-selected concepts impact performance, in Figure~\ref{fig:num_concepts}, we plot training curves for MiniGrid policies in two settings. 
On the left, we vary concept predictor accuracies, and on the right, the number of concepts selected. 
Increases in accuracy improve training efficiency; fewer timesteps are needed to achieve the same performance. 
Increases in the number of concepts increase the maximum performance after full training, and we demonstrate results with CartPole in Appendix~\ref{sec:num_concepts_cartpole}. 

In Figure~\ref{fig:num_concepts_accuracy}, we show that accurate concept predictors (y axis) and many concepts (x axis) are needed to achieve near-optimal reward.  
For example, in CartPole, using only $k=2$ concepts caps performance at 38, while having 85\% accurate concept predictors caps performance at 43, compared to 77 with 95\% accurate predictors at $k=3$. 
While well-selected concepts are necessary to ensure effective concept-based models, they are not sufficient. 
The exact number of concepts and accuracy level is environment-dependent; in Glucose and Pong even 95\% accurate concept predictors are not sufficient, while in MiniGrid, 95\% accurate concept predictors achieve a normalized reward of 93.

\subsection{RQ2: Impact on Test-Time Intervention}
\label{sec:intervention_experiments}

\begin{figure}
    \centering 
    \includegraphics[width=0.45\textwidth]{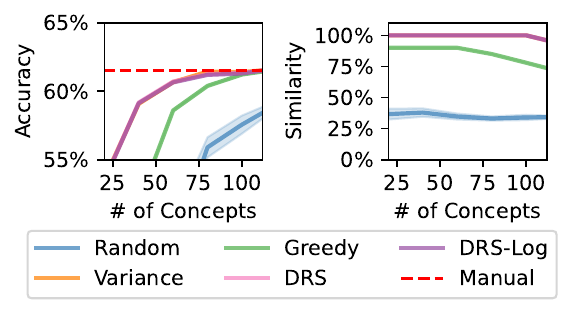}
    \caption{The DRS algorithm can mimic manual concept selection in CUB. It matches performance (left)  while selecting a subset of the manually selected concepts (right). }
    
    \label{fig:cub}
\end{figure}

\paragraph{Decision-relevant concepts improve test-time intervention}
We evaluate the impact of concept selection on test-time intervention in the imperfect setting.
We weaken policies and concept predictors by training for fewer timesteps, which better mimics scenarios that require intervention. 
In Figure~\ref{fig:intervention_comparison}, we compare reward and accuracy with and without intervention, while fixing $\alpha=0.5$ (we vary $\alpha$ and find similar results in Appendix~\ref{sec:intervention_percentage}) and selecting a random subset of concepts to intervene upon. 
Across environments, DRS maximizes reward after intervention, performing 40\% better in CartPole, 87\% better in MiniGrid, 4\% better in Pong, and 0.5\% better in Boxing than all alternatives. 
Well-selected concepts enhance the efficacy of test-time intervention because intervening on a small set of critical concepts leads to large performance gains.
We note that DRS-log can perform worse under intervention because it is optimized for the original predictor accuracy profile. 
When intervention renders a subset of concept predictors perfectly accurate, the originally selected subset may no longer be appropriate.

\subsection{RQ3: Automatic and Manually Selected Concepts}
\label{sec:similarity}

\paragraph{Decision-relevant concepts match performance of manual selection}
We use the Caltech-UC Davis Birds (CUB) dataset~\cite{cub}, a standard supervised task in concept-based learning. 
Most prior work uses a manually selected subset of 112 of the original 312 concepts, so we investigate how selection algorithms compare with manual selection. 
We start from the original 312 concepts and compare concept selection algorithms. 
We detail how we extend algorithms to the supervised setting in Appendix~\ref{sec:supervised_algorithms}. 

In Figure~\ref{fig:cub}, we show that automatically selecting concepts can replicate manual selection. 
We find that the variance, DRS, and DRS-log algorithms select the same concepts as manual selection because the concepts that best predict labels are the ones with the highest variance, resulting in the same performance when $k=112$. 
These algorithms can replicate manual selection using $k=80$ concepts while achieving within 0.6\% of manual performance. 
\section{Related Work}
\label{sec:related}
\paragraph{Interpretable Reinforcement Learning}
Work in interpretability examines how we can explain model predictions through methods that are trustworthy, informative, and transferable~\cite{model_mythos}. 
In an RL setting, prior work investigates feature importance-based methods~\cite{feature_based} and policy-level explanations~\cite{policy_level}.
Work in the concept-based RL space builds upon policy-level explanations to construct concept policy models~\cite{state2explanation} and extends these to multi-agent~\cite{concept_rl} and scarce-label settings~\cite{licorice_concepts}. 
Our work builds upon these policy-level explanations to better understand concept-based policy construction. 

\paragraph{Concept-Based Interpretability}
Concept-based explanations fall into two categories: post-hoc methods~\cite{tcav} and interpretable models~\cite{cbms,cem}. 
Key to the success of both methods is the selection of concepts. 
While this parallels the classic feature selection problem~\cite{feature_selection}, our work differentiates itself by selecting human-interpretable concepts based on their decision relevance rather than predictive performance. 
In the supervised setting, concepts are often selected manually~\cite{cbms,cem}, while in the unsupervised setting, concepts are selected via LLMs~\cite{label_free_cbms,labo}.
We differentiate ourselves from~\citet{concept_completeness}, which analyzes alignment between concept vectors and states in supervised learning, because we 1) formalize the concept selection problem, 2) focus on interpretable models rather than post-hoc, and 3) focus on decision-relevance in RL. 

\paragraph{State Abstractions}
We leverage state abstraction theory to generate performance guarantees for concept selection. 
Work in state abstraction imposes various desiderata on states within an abstraction group, including \emph{exact bisimulation}~\citep{equivalence_notion} or \emph{$\varepsilon$–optimality} criteria~\citep{unified_state_abstraction}. 
At a high level, an exact bisimulation abstraction groups states that behave identically under all possible actions: that is, they yield the same immediate rewards and have the same transition patterns to abstract successor states. 
Beyond exact bisimulation, other work studies approximate abstractions~\cite{approximate_state_abstraction} and abstraction selection~\cite{abstraction_selection}. 
While both our work and state abstractions aim to aggregate states according to various desiderata, our goal is to ensure interpretability via state selection, while state abstraction focuses on improving sample complexity for non-interpretable models. 
\section{Discussion and Conclusion}
Our work investigates how to reduce manual concept engineering by selecting concepts in an automated and principled manner. 
Our key intuition is that decision-relevant concepts should distinguish between states that are different.
We use results from state abstractions to quantify what ``different'' means, and use this to bound performance for a selected subset of concepts. 
In practice, we show that our proposed algorithms can identify concepts most relevant for decision-making, which improves performance both in standalone settings and in human–AI collaboration. 
Our work takes a step toward understanding how explanation structure shapes decision quality and human–AI collaboration.
Future work could extend our algorithms to non-binary concepts and more complex settings.

\newpage 

\bibliography{references}
\bibliographystyle{icml2025}

\newpage 

\appendix
\section*{Impact Statement}
This paper presents work whose goal is to advance the field of machine learning. There are many potential societal consequences of our work, none of which we feel must be specifically highlighted here. 

\section*{Acknowledgements}
We thank Mateo Espinosa-Zarlenga, Grace Liu, Nicholay Topin, and Santiago Cortes-Gomez for their comments on an earlier version of this work, and Sarah Cen for a discussion on concept selection. 
Co-author Raman is supported by an NSF GRFP award. 
This work was supported in part by NSF grant IIS-2046640 (CAREER).

\section{Environment Details}
\label{sec:environments_detail}
We provide details on the construction of glucose environment,  concept banks, and the training of concept predictors and groundtruth models in this section. 

\paragraph{Glucose Environment Construction}
We use the glucose environment~\cite{glucose} as a test-bed to understand the performance of our algorithm in real-world healthcare environments. 
Our observation space is a low-dimensional vector consisting of six normalized features derived from the current blood glucose (BG), its short-term trend, carbohydrate intake, insulin-on-board, and recent dosing history. 
The observation space is defined as a bounded continuous box, while the action space is discrete and corresponds to a set of clinically realistic bolus doses.
Each step corresponds to $5$ minutes of simulated time.  
At every step, a discrete action is mapped to an insulin dose (in units), which is then passed to the underlying simulator. 
The environment tracks BG history and the time since administration for all prior doses, and uses these quantities to construct the next observation. 
The reward function penalizes hypo- and hyperglycemia while incorporating dose size and BG rate-of-change.  
Resetting initializes the patient model, clears all histories, and returns the constructed feature vector for the first time step.

\paragraph{Concept Banks}
We give details on the concept banks used for each of our five environments: 
\begin{enumerate}
    \item \textbf{CartPole} originally has a state space consisting of four numbers: the position, velocity, angle, and angular velocity. We discretize each of these with two thresholds for position and angle, and four thresholds for angular velocity and velocity. This gives a total of 12 concepts. 
    \item \textbf{MiniGrid} is characterized through the agent's position, door location, key obtaining, and door unlock status. 
    For MiniGrid, we use a fully symbolic state representation consisting of $12$ discrete features extracted from the final observation frame. These features encode the agent's $(x,y)$ position, orientation, object locations, and local binary indicators. Concretely, the features correspond to: the agent's $x$-position and $y$-position; the agent's direction; the $x$- and $y$-positions of the key; the $x$- and $y$-positions of the door; a binary indicator for whether the door is open; and four binary indicators denoting the presence of obstacles to the right, left, down, and up of the agent. This totals to 44 concepts, noting that each feature is discretized into multiple binary concepts. 
    \item \textbf{Pong} has concepts derived from object-centric state variables computed from the last one or two frames of the observation stack. Base features include continuous-valued positions, velocities, and relative offsets of the agent paddle, ball, and opponent paddle.
    Specifically, we extract the agent paddle's vertical position; the ball's horizontal and vertical positions; the opponent paddle's vertical position; the ball's horizontal and vertical velocities; the opponent paddle's vertical velocity; and several relative position features, including paddle--ball and opponent--ball offsets. Position features are centered and normalized to lie approximately in $[-1,1]$, while velocity features are computed as frame-to-frame differences, clipped to a fixed range, and rescaled.
    This totals 228 concepts.
    \item \textbf{Boxing} concepts are constructed from the positions and movements of the player and opponent sprites. Base features include the normalized $x$- and $y$-positions of both the player and the enemy in the final observation frame, as well as their velocities computed as differences between the last two frames. We additionally include relative position features capturing the horizontal and vertical offsets between the player and the enemy.
    These concepts capture spatial dominance, motion direction, and relative positioning between the two agents.
    \item \textbf{Glucose} has a state which consists of six continuous-valued physiological or control-related variables extracted from the final observation frame. Each feature corresponds to a scalar quantity, such as an absolute level, rate of change, or control signal, and no temporal differencing is applied.
    Concepts are defined by thresholding each feature at a hand-specified set of values chosen to reflect the feature's numerical range and semantics. The threshold sets vary across features and include both fine-grained thresholds near zero and coarser thresholds for larger-magnitude variables. 
\end{enumerate}

\paragraph{Training Models}
We provide details on model training for each environment below. 
For all policies, we use Proximal Policy Optimization (PPO)~\cite{ppo}, with environment-specific hyperparameters. 
For CartPole, we train using $n_{\text{steps}}=1024$, a batch size of $256$, and $4$ epochs per update, with an entropy coefficient of $5\times10^{-4}$, a learning rate of $10^{-4}$, and a value-function loss coefficient of $0.5$. For MiniGrid, we again use $n_{\text{steps}}=1024$ but increase the batch size to $1024$, while keeping $4$ training epochs per update. For Pong, we use $n_{\text{steps}}=1024$, a batch size of $256$, and $5$ epochs per update, with a learning rate of $10^{-4}$. Finally, for Boxing, we use $n_{\text{steps}}=1024$, a batch size of $1024$, and $4$ epochs per update. All experiments are run on a GPU using CUDA.

Next, we give details on training interpretable policies, which again use PPO~\cite{ppo} but with smaller or structured network architectures to encourage interpretability. For CartPole, we use a two-layer policy network with hidden dimensions $[64,64]$, a rollout length of $256$, batch size $128$, and $4$ epochs per update, with a learning rate of $10^{-4}$, zero entropy regularization, a clipping range of $0.1$, and a target KL divergence of $0.01$. For MiniGrid, we use a learning rate of $3\times10^{-4}$ with $n_{\text{steps}}=1024$, batch size $1024$, and $4$ epochs per update. For Pong, we use a larger two-layer architecture with hidden dimensions $[256,256]$, together with $n_{\text{steps}}=1024$, batch size $256$, $5$ epochs per update, and a learning rate of $10^{-4}$. For Boxing, we use different hyperparameters depending on the experimental setting: in imperfect or intervention-based settings, we use a wider architecture $[512,256]$, larger rollouts ($n_{\text{steps}}=2048$), batch size $512$, $10$ epochs per update, a learning rate of $2\times10^{-4}$, entropy coefficient $0.01$, and clipping fraction $0.1$; otherwise, we use a smaller architecture $[128,128]$ with $n_{\text{steps}}=128$, batch size $256$, $4$ epochs per update, and entropy coefficient $0.01$. Finally, for the Glucose environment (including both processed and raw variants), we use $n_{\text{steps}}=1024$, batch size $256$, $5$ epochs per update, a learning rate of $3\times10^{-4}$, and an entropy coefficient of $0.01$, which we found necessary for stable training.

For training concept predictors, we use the Nature CNN backbone~\cite{nature_cnn}. 
We train all CNNs for 25 epochs using 50k rollout steps. 
All concept predictors are at least 95\% accurate per concept in MiniGrid, Pong, and Boxing. For CartPole, velocity-related concepts are at least 85\% accurate, while all other concepts are at least 95\% accurate.  
Concept predictions are kept as logits to preserve information rather than rounding. 
We train concept-based models under the sequential paradigm~\cite{cbms}, where we first train concept predictors $g_{\mathbf{c}}$, then learn interpretable policies $\pi^{*}_{\mathbf{c}}$. 

We train all models on a server running Ubuntu 20.04 (Linux kernel 5.15) with 4 NVIDIA RTX A6000 GPUs. 

\section{Experimental Details}
\label{sec:experimental_details}
We run all experiments for 6 seeds and report the standard error. 
We normalize rewards per-environment by setting the following min/max ranges for reward: CartPole is from 0 to 500, MiniGrid is from 0 to 1, Pong is from -21 to 21, Boxing is from 0 to 100, and Glucose is from -20 to 40. 
For the first four environments, the normalization comes from bounds on the reward, while for Glucose, we develop this normalization factor based on the performance of all policies. 
For intervention experiments, we weaken training for all but the CartPole environment, so concept predictors are trained for only 1 epoch, and policies are trained for 250k, 4M, and 2M epochs for MiniGrid, Pong, and Boxing, respectively. 
We assume that intervening on concept $j$ results in perfect accuracy for the concept, $\delta_{j} = 1$. 
For all policies, we use Proximal Policy Optimization (PPO)~\cite{ppo}, with environment-specific hyperparameters. 

\section{Algorithm Implementation Details}
\label{sec:algorithm_details}
We estimate Q-values using a stable TD learning procedure~\cite{rl_textbook}. A neural Q-network with two ReLU hidden layers and orthogonal initialization is trained together 
with a slowly updated target network. At each step, the agent stores 
transitions $(s, a,r,s')$ into a replay buffer and updates the parameters by 
minimizing the Double--DQN TD loss.
\[
\ell \;=\; 
\Bigl(Q_\theta(s,a) 
 - \bigl[r + \gamma\, Q_{\theta^-}\bigl(s',\,\arg\max_{a'} Q_\theta(s',a')\bigr)\bigr]\Bigr)^2 .
\]
Gradients are clipped for stability, and the target network is synchronized 
periodically. After training, the Q-value $Q(s, a)$ for any state--action pair 
is obtained by a single forward pass through the learned network.

The greedy method measures the within-class standard deviation after splitting on each concept then sorts by this. 
That is, for each concept, we sum the standard deviation for Q values when the concept is on and off, summed across all actions. 
The intuition is that if a concept is highly relevant to the reward or dynamics, states where the concept is on should have similar Q-values, and states where it is off should have similar Q-values.

For the DRS and DRS-log algorithms, we fix $\rho=0.75$; if no solution is found, we decrement $\rho$ by $0.05$ until a solution is found. 
When using the DRS algorithm in practice, we relax \emph{P1c} by dropping the constraint, as this improves performance in practice (see Appendix~\ref{sec:other_algorithms}). 
We sample $D_{s,s'}$ from rollouts of $\pi^{*}$ with 200k timesteps, while we compute $\mathbf{1}[\pi^{*}(s) \neq \pi^{*}(s')]$ with 10k timesteps from the rollout of $\pi^{*}$. 

\section{Hyperparameter Selection for DRS}
\label{sec:other_algorithms}
We compare our DRS algorithm against two variations of the algorithm: one which incorporates \emph{P1c}, along with others which vary the value of $\rho \in \{0,0.5,0.75,0.99\}$. 
We compare these in both the perfect and imperfect MiniGrid and CartPole algorithms as a simple evaluation to understand performance. 

\begin{figure}
    \includegraphics[width=0.45\textwidth]{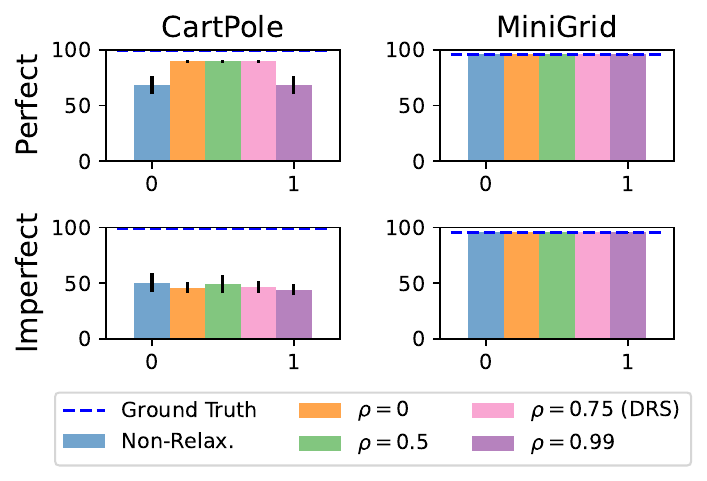}
    \caption{We assess the performance of a) not relaxing constraint \emph{P1c} for DRS and b) varying $\rho$ for CartPole and MiniGrid. We find that $\rho=0.75$ performs well across perfect and imperfect settings, while introducing \emph{P1c} leads to lower performance in CartPole. }
    \label{fig:other_methods}
\end{figure}

In Figure~\ref{fig:other_methods}, we find that re-introducing \emph{P1c} leads to worse performance in perfect CartPole, while setting $\rho=0.99$ or $\rho=0.0$ also leads to worse performance.
Here, enforcing \emph{P1c} forces an ordering over state pairs to cover, which can be too strict in practice, leading us to relax it when constructing DRS. 
Similarly, setting $\rho=0.99$ can be too strict, as it forces DRS to separate 99\% of state pairs with different optimal actions. 

\section{Varied Number of Concepts Selected}
\label{sec:varied_k}
\begin{figure*}[h]
    \centering 
    \includegraphics[width=\textwidth]{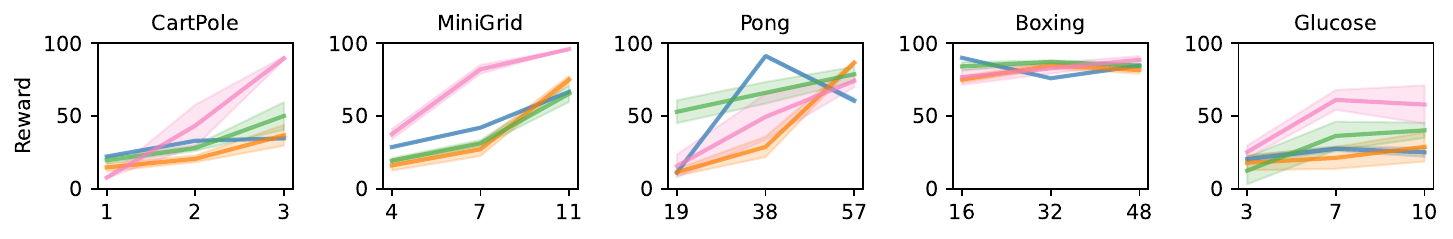}
    \includegraphics[width=\textwidth]{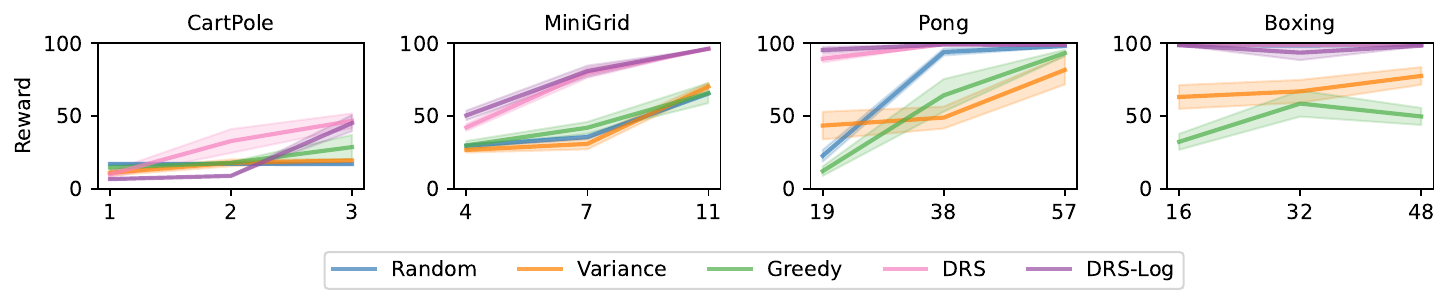}
    \caption{We compare concept selection algorithms while varying the number of concepts selected $k$ with perfect (top) and imperfect (bottom) concept predictors. We find that for CartPole and MiniGrid, DRS outperforms baselines across values of $k$, while for environments such as Boxing and Glucose, a threshold number of concepts is needed before DRS outperforms baselines. }
    \label{fig:varied_k_comparison}
\end{figure*}

We vary $k \in \{\frac{K}{12},\frac{K}{6},\frac{K}{4}\}$ in Figure~\ref{fig:varied_k_comparison} for both perfect and imperfect settings. 
We find that better concept selection algorithms can improve performance across many values of $k$. 
For example, in the perfect environment setting for CartPole, we find that the DRS algorithm improves performance even with only $k=2$ concepts, leading to an 32\% improvement over the next best alternative. 
Similarly, such a trend is also present in MiniGrid, where the DRS algorithm improves performance by 96\% when $k=7$. 
In some environments, many concepts are necessary to achieve high performance; for example, in Glucose, while the DRS algorithm is 24\% better than alternatives when $k=3$, it achieves 131\% better than all alternatives once $k=10$. 

\section{Qualitative Analysis}
\label{sec:qualitative}
\begin{table}[t]
\centering
\small
\caption{Average number of concepts selected per group by DRS in MiniGrid, with total concepts per group.}
\label{tab:minigrid_concepts}
\begin{tabular}{lcc}
\toprule
\textbf{Concept Group} & \textbf{Avg. Selected} & \textbf{Total} \\
\midrule
\multicolumn{3}{l}{\textit{Agent}} \\
Position X     & 1 & 5 \\
Position Y     & 2 & 5 \\
Direction      & 3 & 4 \\
Action: Right  & 0 & 2 \\
Action: Left   & 1 & 2 \\
Action: Down   & 0 & 2 \\
Action: Up     & 1 & 2 \\
\midrule
\multicolumn{3}{l}{\textit{Key}} \\
Position X     & 0 & 5 \\
Position Y     & 1 & 5 \\
\midrule
\multicolumn{3}{l}{\textit{Door}} \\
Position X     & 0 & 5 \\
Position Y     & 1 & 5 \\
Open State     & 1 & 2 \\
\bottomrule
\end{tabular}
\end{table}

We next provide intuition for why DRS improves performance by comparing the set of concepts selected. 
In MiniGrid, concepts include agent location, door location, and agent direction  (see Table~\ref{tab:minigrid_concepts}).
DRS consistently selects direction-related concepts, which are not selected by variance or greedy. 
This occurs because the direction-related concepts have high action variability, but are also decision-relevant. 
Knowing which direction an agent faces is critical to train effective policies.

\section{Runtime of Algorithms}
\label{sec:runtime}
\begin{figure*}[h]
    \centering 
    \includegraphics[width=\textwidth]{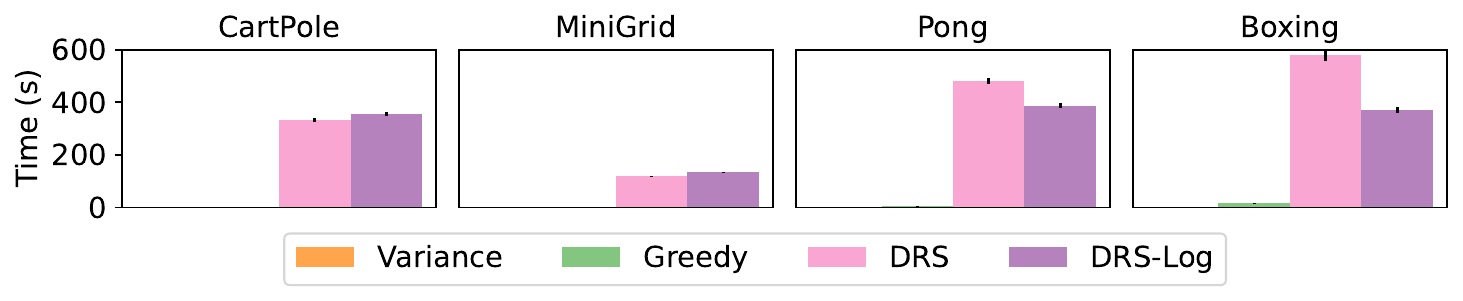}
    \caption{We compare the time to compute across methods and datasets, and find that all algorithms run in under ten minutes. While DRS and DRS-log take longer than variance and greedy, they still run quickly across environments. }
    \label{fig:timing}
\end{figure*}

We compare the runtime of the Variance, Greedy, DRS and DRS-log algorithms across different environments (while we note that the random selection algorithm essentially takes no time).
In Figure~\ref{fig:timing}, we show that all algorithms run in under ten minutes across all methods.
This runtime is much smaller than the time needed to train the policies themselves, which can take hours. 
While more complex environments such as Pong or Boxing take longer for the DRS and DRS-log algorithms, they still run quickly. 
Much of this time is spent solving the optimization problem and for running rollouts to compute optimal actions. 
Moreover, we note that $n_d$, or the effective dimension, is small in practice. 
For example, in our rollouts for Pong, we find that $n_d = 1951$ when rolled out for 200,000 steps, which is much less than $2^{k} = 2^{228}.$
Additionally, even when we double the number of concepts in Pong (using redundant concepts), we find that all algorithms run in under 10 minutes (with only a 23\% increase in the time taken) even with 400+ candidate concepts. 
We note that we precomputed $Q$ values for all methods, though the computation of $Q$ values takes under ten minutes as well. 

\section{Training Curves for CartPole}
\label{sec:num_concepts_cartpole}
\begin{figure}[t]
    \centering 
    \includegraphics[width=0.5\textwidth]{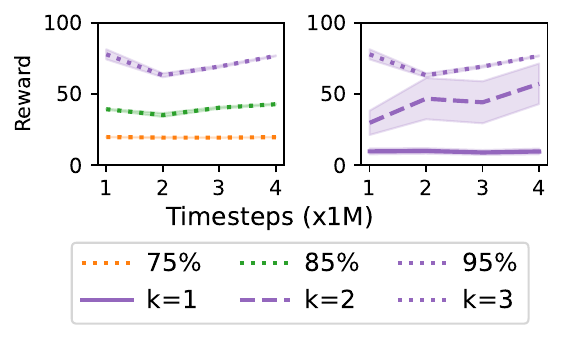}
    \caption{We vary the number of timesteps that we train policies for in CartPole, while also varying the accuracy of concept predictors (left) or the number of concepts selected (right). When the number of concepts is small or the accuracy of the concepts is low, policies fail to train, with the efficiency of training increasing with both parameters.}
    \label{fig:num_concepts_cart_pole}
\end{figure}

In Figure~\ref{fig:num_concepts_cart_pole}, we plot training curves when varying the concept predictor accuracy (left) and number of concepts (right). 
Our results are noisier compared to those for MiniGrid, yet in both, we find that when concept predictor accuracy is low or the number of concepts is low, policies fail to train. 
Concepts train slightly faster when going from 75\% to 85\% accurate concept predictors, and similarly the maximum performance increases when increasing the number of concepts selected.
However, both of these trends are noisy in CartPole due to the large amount of stochasticity; even small differences in the set of concepts selected or the training dynamics can greatly impact policy performance. 

\section{Varying the Intervention Percentage}
\label{sec:intervention_percentage}
\begin{figure}[h]
    \centering 
    \includegraphics[width=0.4\textwidth]{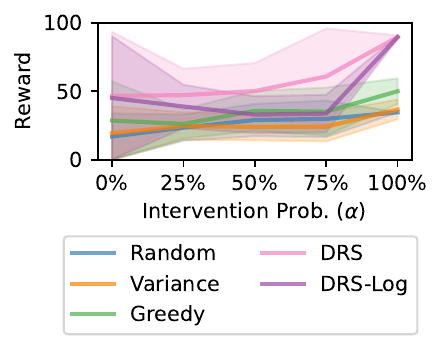}
    \caption{We assess test-time intervention performance when varying the level of intervention $\alpha$ in CartPole. Across all $\alpha$, the DRS algorithm performs best, with steadily improving performance as $\alpha$ is increased.}
    \label{fig:intervention_percent}
\end{figure}

In Figure~\ref{fig:intervention_percent}, we vary $\alpha$ and analyze the reward in CartPole.
We find that the DRS algorithm improves performance across values of $\alpha$, and steadily improves as $\alpha$ increases.
When $\alpha=0.5$, DRS achieves 40\% better than baselines and at $\alpha=0.75$, DRS achieves a 73\% better performance.  
DRS also improves as $\alpha$ increases; at $\alpha=0.5$ DRS improves performance compared to $\alpha=0$ by 7\%, and at $\alpha=0.75$, DRS improves by 30\%.

\section{Concept Selection Algorithms in Supervised Settings}
\label{sec:supervised_algorithms}
We extend the algorithms from Section~\ref{sec:imperfect} in supervised settings to use with the CUB dataset. 
In this setting, rollouts are replaced by labeled examples, and concept selection is performed using empirical statistics computed over the training set. 
The variance method selects concepts whose presence most evenly partitions the data, favoring concepts with high empirical variability. 
The greedy method ranks concepts by their association with the target label, selecting those with the strongest marginal predictive signal.
The DRS method formulates concept selection as a weighted hitting-set problem over pairs of training examples with differing labels, selecting a fixed-size subset of concepts that maximizes deterministic coverage of label-separating pairs. 
Here, weights capture whether $s$ and $s'$ have the same label; $D_{s,s'} = 1$ if $s$ and $s'$ have different labels, essentially capturing a one-step difference in Q values. 
Essentially, this extends coverage over $Q$ values to the supervised setting. 
Finally, DRS-log extends this formulation by incorporating concept accuracies into the coverage constraints, replacing hard coverage with a probabilistic notion of separation that accounts for uncertainty in concept predictions while retaining the same budgeted optimization structure.

\section{Selecting Concepts with Imperfect Policies}
\label{sec:groundtruth_model}
Throughout our work, we have assumed access to a perfect model $\pi^{*}$. 
In this section, we relax that assumption by training optimal policies $\pi^{*}$ in Minigrid for varied numbers of timesteps. 
We compare the performance of two baselines (variance and greedy) and our two algorithms (DRS and DRS-log) when varying the number of gold timesteps. 
We do not test the random selection algorithm because it is independent of $\pi^{*}$. 

\begin{figure}
    \centering 
    \includegraphics[width=0.45\textwidth]{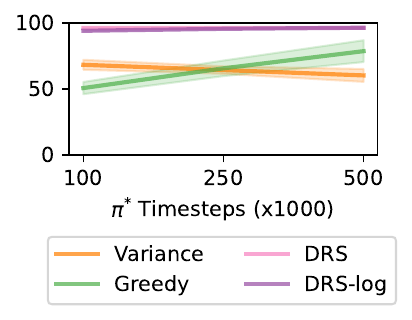}
    \caption{We vary the number of timesteps we use to train the policy $\pi$ without concepts in MiniGrid. Typically, we have $\pi = \pi^{*}$, but here, we demonstrate that the DRS and DRS-log algorithms perform well even when $\pi$ is trained for only 100k timesteps.} 
    \label{fig:varied_gold}
\end{figure}
In Figure~\ref{fig:varied_gold}, we find that the DRS and DRS-log algorithms perform well across the number of timesteps $\pi^{*}$ is trained for. 
Both algorithms achieve optimal performance regardless of the number of timesteps $\pi^{*}$ is trained for.
The greedy and variance algorithms perform at least 18\% worse no matter how many timesteps $\pi^{*}$ is trained for. 
Therefore, even with access to an approximate policy $\hat{\pi}$, the DRS and DRS-log algorithms perform well. 

\section{Proofs}
\label{sec:proofs}
\begin{theorem}
    Optimizing the objective in Equation~\ref{eq:objective} is NP-hard, even under the following restrictions: (i) concept predictors are perfect, and (ii) the optimal action-value function $Q^*$ is known.
\end{theorem}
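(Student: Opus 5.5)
We reduce from a classical NP-hard covering problem; the most natural choice is \emph{Set Cover} (equivalently \emph{Hitting Set}) in its decision form: given a universe $U=\{u_1,\ldots,u_n\}$, a family $\mathcal{F}=\{F_1,\ldots,F_K\}$ of subsets of $U$, and an integer $k$, decide whether some $k$ sets cover $U$. Because the separating structure of concepts concerns \emph{pairs} of states, we encode each element $u_i$ as a pair of states that must be separated. We build an MDP that uses only one-step decisions, so that the optimal $Q$-function is trivially known, which gives restriction (ii). Each concept is an exact Boolean function of the state, which gives restriction (i).

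\textbf{Construction.} Introduce a special state $s_0$ and one state $s_i$ for each element $u_i$. Use two actions $\{a,b\}$ and make every state transition to an absorbing zero-reward sink; to keep the sink from being merged harmfully, give it its own dedicated concept, or equivalently take $\gamma=0$. Set $R(s_0,a)=1$, $R(s_0,b)=0$, and $R(s_i,a)=0$, $R(s_i,b)=1$ for all $i\ge 1$. Thus $Q^*$ is given in closed form, and $s_0$ prefers $a$ while every $s_i$ prefers $b$. For each set $F_j$ define the concept $c_j(s_i)=\mathbf{1}[u_i\in F_j]$ and $c_j(s_0)=0$. Then $g_{\mathbf c}(s_i)\neq g_{\mathbf c}(s_0)$ if and only if some selected $F_j$ contains $u_i$. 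Take $\mu$ uniform over $\{s_0,s_1,\ldots,s_n\}$.

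\textbf{Key claim.} The optimal value of Equation~\ref{eq:objective} equals $1$, the unabstracted optimum, if and only if there is a cover of size $\le k$.
\begin{itemize}
\item If the selected sets cover $U$, every $s_i$ is separated from $s_0$. Any two merged states then share the optimal action, so $\pi^*_{\mathbf c}$ can play $a$ at the codeword of $s_0$ and $b$ everywhere else, and it attains $1$.
\item If the selected sets do not cover $U$, some $s_i$ shares the codeword of $s_0$. That abstract state holds at least one state preferring $a$ and at least one preferring $b$, so any deterministic (or stochastic) abstract action loses total mass at least $1/(n+1)$ in expectation. The objective is therefore at most $1-1/(n+1)$.
\end{itemize}
A polynomial-time solver for the objective would therefore decide Set Cover. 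The reduction has polynomial size in $n$ and $K$, and its reward gap of $1/(n+1)$ is polynomially representable.

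\textbf{Main obstacle.} The delicate step is pinning down what $\pi^*_{\mathbf c}$ and the objective mean on merged states. The paper writes $Q^{\pi^*_{\mathbf c}}(s,\pi^*_{\mathbf c}(g_{\mathbf c}(s)))$, so multi-step effects of the abstraction could in principle leak into values. Using one-step episodes (terminal transitions, or $\gamma=0$) removes this ambiguity: $Q^{\pi^*_{\mathbf c}}(s,\cdot)=R(s,\cdot)$, and the optimal abstract policy simply takes a $\mu$-weighted majority action within each codeword, which makes the loss computation above exact. If one insists on $\gamma>0$, I would first check that the sink is distinguishable, via an extra always-selected concept and an incremented budget, and that all continuation values equal $0$, so the same argument goes through.
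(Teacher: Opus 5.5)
Your reduction is correct. It shares the paper's core idea: elements of a covering instance become pairs of states with opposite optimal actions, sets become concepts, and an element counts as covered exactly when a selected concept separates its pair. The gadget and the source problem differ, however.

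The paper reduces from Weighted Maximum Coverage using $n$ disjoint pairs $(s_i^L,s_i^R)$. Every $s_i^L$ lies in the all-zero codeword and no covered $s_i^R$ does, so the optimal abstract policy plays $a_L$ there and $a_R$ elsewhere. The objective then equals, up to a positive normalizing constant, $\frac12\sum_i w_i+\frac12\sum_{e_i\in C(\mathbf c)}w_i$. Every concept set's value is thus an affine function of its covered weight, which makes this a value-preserving reduction from an optimization problem.

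Your hub gadget, where all pairs share $s_0$, only produces a gap. Every non-cover scores exactly $n/(n+1)$, because once any $s_i$ joins $s_0$, playing $b$ on the zero codeword is optimal no matter how many elements are covered. You therefore have to reduce from decision Set Cover, which is enough for NP-hardness. In return your instance is smaller and has $Q^*=R$ outright. Your loss bound also holds for every codeword-measurable policy, even a stochastic one, so it does not depend on how $\pi^*_{\mathbf c}$ is defined. The paper's write-up, by contrast, leaves the shared zero codeword and the choice of $a_L$ on it implicit.

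One correction to your $\gamma>0$ remark: drop the dedicated sink concept. An absorbing sink with zero reward under every action has value $0$ under every policy. Merging it with any codeword is therefore harmless, and $Q^{\pi}(s,\cdot)=R(s,\cdot)$ at every non-sink state for every $\pi$, so your argument holds verbatim for any $\gamma$. Adding a useless sink concept and raising the budget to $k+1$ would actually break the reduction. Nothing forces the optimizer to spend a slot on that concept, so it could pick $k+1$ set-concepts, and you would be testing for a $(k+1)$-cover.
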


\begin{proof}
We prove the claim via a polynomial-time reduction from the \emph{Weighted Maximum Coverage} problem.

An instance of weighted maximum coverage consists of a universe
$U=\{e_1,\dots,e_n\}$ with weights $w_i\ge 0$, a collection of sets
$\mathcal S=\{S_1,\dots,S_m\}$, and a budget $k$. The goal is to select at most
$k$ sets maximizing $\sum_{e_i\in \cup_{S_j\in\mathbf c} S_j} w_i$.

Given such an instance, we construct a deterministic episodic MDP.
For each element $e_i$, introduce two terminal states $s_i^L$ and $s_i^R$ and an
initial state $s_0$. From $s_0$, the MDP transitions uniformly to one of the
states $\{s_i^L,s_i^R\}_{i=1}^n$. The action space is
$\mathcal A=\{a_L,a_R\}$, and rewards are defined by
$R(s_i^L,a_L)=w_i,\quad R(s_i^L,a_R)=0,\qquad
R(s_i^R,a_R)=w_i,\quad R(s_i^R,a_L)=0.$
Thus the optimal action-value function $Q^*$ is known.

We define a concept bank as follows. For each set $S_j\in\mathcal S$, introduce a
binary concept $c_j$ such that
\[
c_j(s_i^L)=0,\quad c_j(s_i^R)=1 \;\;\text{if } e_i\in S_j,
\]
and $c_j(s_i^L)=c_j(s_i^R)=0$ otherwise.

Fix any subset of concepts $\mathbf c$ with $|\mathbf c|\le k$, and let
$\pi^{*}_{\mathbf{c}}$ be the optimal policy that conditions only on
$g_{\mathbf c}$. If no selected concept distinguishes $(s_i^L,s_i^R)$, then
$\pi^{*}_{\mathbf{c}}$ must take the same action in both states, yielding expected
reward $w_i/2$. If the pair is distinguished by at least one selected concept,
the policy can take the optimal action in each state and obtain reward $w_i$.

Let $C(\mathbf c)\subseteq U$ be the elements whose corresponding state pairs are
distinguished by $\mathbf c$. The expected return of $\pi^{*}_{\mathbf{c}}$ is
\[
\sum_{e_i\in C(\mathbf c)} w_i + \sum_{e_i\notin C(\mathbf c)} \frac{w_i}{2}
= \frac{1}{2}\sum_i w_i + \frac{1}{2}\sum_{e_i\in C(\mathbf c)} w_i .
\]
Since the first term is constant, maximizing Equation~\ref{eq:objective} over
$|\mathbf c|\le k$ is equivalent to maximizing
$\sum_{e_i\in\cup_{S_j\in\mathbf c} S_j} w_i$, i.e., solving weighted maximum
coverage.

Therefore, a polynomial-time algorithm for optimizing
Equation~\ref{eq:objective} would imply a polynomial-time algorithm for weighted
maximum coverage. Since weighted maximum coverage is NP-hard, the concept
selection problem is NP-hard as well.
\end{proof}

\propconceptabstraction*
\begin{proof}
By assumption, whenever $g_{\mathbf{c}}(s) = g_{\mathbf{c}}(s')$ we have
$\max_a |Q(s,a)-Q(s',a)| \le \epsilon$.
Thus, $g_{\mathbf{c}}$ induces an $\epsilon$-approximate state abstraction
in the sense of Theorem~\ref{thm:approximate_state_abstraction}.
The claimed bound on the value loss of the induced policy $\pi^{*}_{\mathbf{c}}$
follows directly from that theorem.
\end{proof}

\theoremhuman*
\begin{proof}
By Assumption~\ref{asm:abstraction_dominance}, a smaller abstraction error implies a higher value under perfect concept accuracy.
Therefore, under perfect intervention, 
$\epsilon(g_{\mathbf c}) \le \epsilon(g_{\mathbf c'})$.

Human test-time intervention improves concept accuracy coordinate-wise, so for any
$\alpha$, the resulting accuracy vector $\delta^{(\alpha)}$ satisfies
$\delta^{(\alpha)} \preceq \mathbf 1$.
By Assumption~\ref{asm:monotone_dominance}, relative performance rankings between policies induced by
$\mathbf c$ and $\mathbf c'$ are preserved as accuracy improves.
Therefore, the ordering induced at perfect accuracy extends to all intermediate
levels of intervention, completing the proof.
\end{proof}

\theoremperfect*
\begin{proof}
By constraint~\textup{(P1a)}, the abstraction $g_{\mathbf c}$ uses at most $k$
concepts.

By construction, $Y_{s,s'}^* = 0$ if and only if $g_{\mathbf c}(s) = g_{\mathbf c}(s')$ or $Y_{t,t'}^* = 0$ where $D_{t,t'} \geq D_{s,s'}$.
Let the state pairs be indexed so that
$D_1 \ge D_2 \ge \cdots$, and note that constraint~\textup{(P1c)} enforces
$Y_1^* \ge Y_2^* \ge \cdots$.
Since $Y_{s,s'}^* \in \{0,1\}$, there exists a threshold $\tau$ such that
$Y_{s,s'}^* = 0$ for all pairs with $D_{s,s'} \le \tau$, and
$Y_{s,s'}^* = 1$ for all pairs with $D_{s,s'} > \tau$.

Therefore,
\[
\epsilon(g_{\mathbf c})
=
\max_{s,s' : Y_{s,s'}^* = 0} D_{s,s'}
=
\tau.
\]

Now consider any abstraction $g$ using at most $k$ concepts. For $g$ to achieve
abstraction error strictly less than $\tau$, it must separate all state pairs
with $D_{s,s'} \ge \tau$, which would contradict the optimality of
$(\mathbf x^*, \mathbf y^*)$ for \emph{(P1)}. Hence, no such abstraction exists,
and $g_{\mathbf c}$ is minimax optimal.
\end{proof}

\theoremrobust* 
\begin{proof}
Since $\|\hat Q - Q^{\pi^*}\|_\infty \le \delta$, for all $s,s'$ we have
\begin{align} 
|\hat D_{s,s'} - D_{s,s'}|
\\ =
\left| \max_a |\hat Q(s,a)-\hat Q(s',a)|
-
\max_a |Q(s,a)-Q(s',a)| \right|
\\ \le 2\delta.
\end{align}
Here, this holds because of the Lipschitzness of the max operator and triangle inequality. 

Fix any abstraction $g$. Then
\[
\left|
\max_{s,s' : g(s)=g(s')} \hat D_{s,s'}
-
\max_{s,s' : g(s)=g(s')} D_{s,s'}
\right|
\le 2\delta.
\]
In particular,
\[
\epsilon_{\hat Q}(g) \le \epsilon_Q(g) + 2\delta.
\]

Let $g_{\mathbf c^*}$ be the abstraction minimizing $\epsilon_Q(g)$, and
$g_{\hat{\mathbf c}}$ be the abstraction minimizing $\epsilon_{\hat Q}(g)$.
By optimality of $g_{\hat{\mathbf c}}$ under $\hat d$,
\[
\epsilon_{\hat Q}(g_{\hat{\mathbf c}})
\le
\epsilon_{\hat Q}(g_{\mathbf c^*})
\le
\epsilon_Q(g_{\mathbf c^*}) + 2\delta.
\]
Applying the bound in the opposite direction yields
\[
\epsilon_Q(g_{\hat{\mathbf c}})
\le
\epsilon_{\hat Q}(g_{\hat{\mathbf c}}) + 2\delta
\le
(\epsilon^* + 2\delta) + 2\delta = \epsilon^* + 4\delta.
\]

The value-function bound then follows directly from
Proposition~\ref{thm:approximate_concept_abstraction}.
\end{proof}

\lemmaperturbation*
\begin{proof}
Since $\delta_i < 1$ for all $i$, we may choose $T_i = \{s\}$ for any state
$s \in \mathcal S$.
Let $(s^*,s'^*) \in \arg\max_{s,s'} D_{s,s'}$, and define
\[
P = \{ i : c_i(s^*) \neq c_i(s'^*) \}.
\]
For each $i \in P$, let $T_i = \{s^*\}$ and apply $f^{(T_i)}$, which flips the value
of $c_i$ at $s^*$; for $i \notin P$, let $f^{(T_i)}$ be the identity.
After applying all perturbations, we have
\[
f(g_{\mathbf c})(s^*) = f(g_{\mathbf c})(s'^*).
\]
Since $(s^*,s'^*)$ maximizes $D_{s,s'}$, it follows that
\[
\epsilon(f(g_{\mathbf c})) = \max_{s,s'} D_{s,s'}.
\]
\end{proof}

\onecolumn

\end{document}